\newtheorem{theorem}{Theorem}[section]
\newtheorem{proposition}[theorem]{Proposition}
\newenvironment{proof}[1][Proof.]{\begin{trivlist}
\item[\hskip \labelsep { #1}]}{\end{trivlist}}
\newcommand{\qed}{\nobreak \ifvmode \relax \else
      \ifdim\lastskip<1.5em \hskip-\lastskip
      \hskip1.5em plus0em minus0.5em \fi \nobreak
      \vrule height0.75em width0.5em depth0.25em\fi}
\newtheorem{definition}{Definition}[section]
\begin{document}
%
\title{Complex-Valued Kernel Methods for Regression }
%
%
%

\author{Rafael~Boloix-Tortosa, 
       Juan~Jos\'e~Murillo-Fuentes, 
        Irene Santos Vel\'azquez,
        and Fernando P\'erez-Cruz, 
\thanks{R. Boloix-Tortosa, 
        Irene Santos-Vel\'azquez,
        and Juan~Jos\'e~Murillo-Fuentes are with the Department of Signal Theory and Communications, University of Seville, Spain. e-mail: rboloix@us.es. Fernando P\'erez-Cruz is with Department of Signal Theory and Communications, University Carlos III de Madrid, Spain}
\thanks{Thanks to Spanish government (Ministerio de Educaci\'on y Ciencia, TEC2016-78434-C03-02) and European Union (FEDER) for funding.
}
}

\hyphenation{op-tical net-works semi-conduc-tor hy-per-pa-ra-me-ters ge-ne-ra-tion ge-ne-ra-ted}

\maketitle

\begin{abstract}
  
  Usually, complex-valued RKHS are presented as an straightforward application of the real-valued case. In this paper we prove that this procedure yields a limited solution for regression. We show that another kernel, here denoted as pseudo-kernel, is needed to learn any function in complex-valued fields. Accordingly, we derive a novel RKHS to include it, the widely RKHS (WRKHS). When the pseudo-kernel cancels, WRKHS reduces to complex-valued RKHS of previous approaches. We address the kernel and pseudo-kernel design, paying attention to the kernel and the pseudo-kernel being complex-valued. In the experiments included we report remarkable improvements in simple scenarios where real a imaginary parts have different similitude relations for given inputs or cases where real and imaginary parts are correlated. In the context of these novel results we revisit the problem of non-linear channel equalization, to show that the WRKHS helps to design more efficient solutions.

 \end{abstract}

\begin{IEEEkeywords}
Complex-valued RKHS, kernel methods, regression, non-linear channel equalization.
\end{IEEEkeywords}

%
\IEEEpeerreviewmaketitle

\setlength{\arraycolsep}{0.1em} 

\section{Introduction}
%
%
%
%

\IEEEPARstart{C}{omplex-valued} signal processing is of fundamental interest. Its main benefit is the availability of processing the real and imaginary parts as a single signal. It finds application in a vast range of nowadays systems in science and engineering such as telecommunications, optics, electromagnetics, and acoustics among others. 
Signal processing for complex-valued signals has been widely studied in the linear case, see \cite{Schreier06} and references therein. 
The non-linear processing of complex-valued signals has been addressed from the point of view of neural networks, \cite{hirose13} and, recently, using reproducing kernel Hilbert spaces (RKHS). Some complex kernel-based algorithms have been lately proposed for classification \cite{Steinwart06}, regression \cite{OgunfunmiP11,Bouboulis11,Tobar12} and mainly for kernel principal component analysis \cite{Papaioannou14}. 
Regarding regression, in \cite{Bouboulis11} the authors propose a complex-valued kernel based in the results in \cite{Steinwart06} and face the derivative of cost functions by using Wirtinger's derivatives. Same kernel is adopted in \cite{OgunfunmiP11}. And in \cite{Bouboulis12} the augmented version of the algorithm is proposed.
In \cite{Tobar12} the authors review the kernel design to improve the previous solutions. 
These previous approaches have been developed in the framework of kernel least mean square {(KLMS)}.
%

Except for the method in \cite{Bouboulis12} all these algorithms are straight-forward applications of real-valued RKHS. In \cite{Bouboulis12} some additional considerations are developed for time adaptive estimation within the definition of the inner product in the feature space. These formulations, that are useful in the learning of many problems, are limited for learning in others. As we show in this paper they cannot learn any given complex-valued non-linear function. 
%
%
%


In this paper we propose a novel RKHS for complex-valued fields with full representation capabilities. 
We show that to represent any complex-valued function we need to include an additional term, denoted as pseudo-kernel\footnote{The pseudo-kernel plays a similar role of the pseudo-covariance of complex-valued random variables.}. We refer to this new approach as widely RKHS (WRKHS) after \emph{widely} linear complex valued solutions in linear systems \cite{Schreier06}. The results in  \cite{OgunfunmiP11,Bouboulis11} and \cite{Tobar12} can be seen as a particular case of WRKHS in which the pseudo-kernel is considered zero. We denote these approaches as strictly complex-valued RKHS (SRKHS). The need for the WRKHS can be justified in cases where the real and imaginary parts are correlated and learning them independently is, at best, suboptimal. Besides there are some relations that cannot even be capture with SRKHS approach, while our WRKHS, relaying on the pseudo-kernel, is able to learn on those scenarios, as we illustrate in the experimental section.   


One of the key issues with our WRKHS is the need to define kernels and pseudo-kernels. In this paper we describe valid kernels and pseudo-kernels. We also detail in which cases the WRKHS can be simplified to a SRKHS with complex or real-valued kernel.

Two experiments are included to illustrate the capabilities of WRKHS. First, we face a regression where clearly a different kernel for the real and imaginary parts benefits the learning. Then we learn a function using WRKHS with a real-valued kernel and a pure imaginary complex-valued pseudo-kernel. This solution allows modeling a dependence between real and imaginary part. Here, a WRKHS clearly improves the regression. We revisit the problem of non-channel equalization to conclude, from the results in this paper, that the best option is a SRKHS with real-valued kernel, even in the non-circular case. To compare to previous approaches we develop a version for the recursive case with sample selection \cite{Vaerenbergh12} and compare it to the results in \cite{Bouboulis11,Tobar12} for non-linear channel equalization. 

The paper is organized as follows. In the next section we review some concepts needed on RKHS. We continue in  \SEC{WRKHS} with the derivation of WRKHS. In \SEC{SRKHS} the SRKHS is developed as a special case of WRKHS. \SEC{Ker} is devoted to the analysis of the kernel and pseudo-kernel and some new proposals. 
The performance of these approaches is illustrated in \SEC{Exp}, where several scenarios are presented and the application of WRKHS is discussed. We end with Conclusions.

The notation used in the paper is as follows. For matrix $\vect{A}$, $\entry{\vect{A}}{l}{q}$ is its $(l,q)$ entry, $\vect{A}\trs$ is the transpose of $\matr{A}$, $\vect{A}\her$ the Hermitian transpose, $\matr{A}\cnj$ its complex conjugate and $\matr{A}^{-*}$ its inverse conjugate. $\I_{{\n}}$ denotes the identity matrix of size $\n$. For a vector $\vect{a}$, $[\vect{a}]_l$ denotes its $l$-th entry. To denote the $i$-th sample of a vector and signal we use, respectively, $\vect{a}(i)$ and $a(i)$. The real and imaginary parts are denoted by subindex $\textrm{r}$ and $\textrm{j}$, respectively, i.e. $\vect{a}=\vect{a}\rr+\j \vect{a}\jj$, with $\textrm{j}=\sqrt{-1}$. To denote the complex Gaussian distribution with mean vector $\boldsymbol{\mu}$, covariance matrix $\K$ and pseudo-covariance matrix $\matr{\tilde{K}}$ we use $\calg{N}\left(\boldsymbol{\mu},\K,\matr{\tilde{K}}\right)$. We write the inner {or dot} product as $\langle\vect{a},\vect{b}\rangle=\vect{b}\her\vect{a}=\vect{a}\trs\vect{b}\cnj$.

\section{ RKHS} \LABSEC{RKHS}

A RKHS is a Hilbert space of functions that can be defined by a reproducing kernel $\k: \mathcal{X}\times \mathcal{X} \rightarrow \RN$  \cite{Scholkopf02}. Given the reproducing kernel $\k$, the RKHS $\mathcal{H}_\k$ of real-valued functions on the set $\mathcal{X}$ is the Hilbert space containing $\k(\x,\cdot)$ {for every $\x\in\mathcal{X}$ and where $\k$ has the \emph{reproducing property}} 
\begin{equation}
\begin{matrix}
\f(\x\new)=\langle \f,\k(\x\new,\cdot)\rangle_\k & \;\;\;\forall \f\in \mathcal{H}_\k\end{matrix},
\end{equation}
 being $\langle\cdot,\cdot\rangle_\k$ the inner product in $\mathcal{H}_\k$. {In particular, $\langle \k(\x,\cdot),\k(\x\new,\cdot)\rangle_\k=\k(\x,\x\new)$.}
In a RKHS, functions are in the closure of the linear combinations of the kernel at given points:
\begin{equation}\LABEQ{rkhs}
\f(\x\new)=\sum_{i=1}^{\n}\alpha_i\k(\x\new,\x({i}))=\kv(\x\new,\X) \alfav,
\end{equation}

where $\f$ is in the class $\mathcal{F}$ of real functions forming a real Hilbert space, $\alfav=[\alpha_1,\alpha_2,...,\alpha_\n]\trs$, and {$\kv(\x\new,\X)=[\k(\x\new,\x({1})), \k(\x\new,\x({2})),...,\k(\x\new,\x({\n}))]$}.

In the complex-valued case, one might work with the real and imaginary parts stacked into a so denoted \emph{composite} vector form. The definition of RKHS for vector valued functions parallels the one in the scalar, with the main difference that the reproducing kernel is now matrix valued \cite{Alvarez12},
\begin{equation}\LABEQ{MOL}
\fv\com{(\x\new)}=\begin{bmatrix}\f\rr(\x\new)\\ \f\jj(\x\new) \end{bmatrix}= 
\begin{bmatrix}
\kv\rrrr(\x\new,\X)
&
\kv\rrjj(\x\new,\X)
\\
\kv\jjrr(\x\new,\X)
&
\kv\jjjj(\x\new,\X)
 \end{bmatrix}\begin{bmatrix}\alfav\rr\\ \alfav\jj
  \end{bmatrix}
 \end{equation}
that can be rewritten in compact form as $\fv\com{(\x\new)}={\K}\com(\x\new,\X)\alfav\com$.
We have two-dimensional vector, {and we can define an estimator by minimizing the regularized empirical error on the basis of a training set $\tset=\{\X,\yv	\}=\{(\x(1),\y(1)), . . . ,(\x(\n),\y(\n))\}$}:
 \begin{equation}
\frac{1}{\n}\sum_{i=1}^{\n}\left( \f\rr(\x(i) ) - {\y\rr}{(i)} \right)^2+\frac{1}{\n}\sum_{i=1}^{\n}\left( \f\jj(\x(i) ) - {\y\jj}{(i)} \right)^2+\frac{\lambda}{n}\|\fv\com \|_{\K}^2
\end{equation}
 the coefficients yield
\begin{equation} \LABEQ{compositealfa}
\alfav\com=\left( \K\com(\X,\X) +\lambda\I_{{2\n}} \right)\inv\yv\com,
\end{equation}
where $\yv\com=[\yv\rr\trs \yv\jj\trs]\trs$, with $\yv\rr=[\y\rr(1), . . . ,\y\rr(\n)]\trs$ and $\yv\jj=[\y\jj(1), . . . ,\y\jj(\n)]\trs$.


\section{Widely Complex RKHS} \LABSEC{WRKHS}

Based on the \emph{widely linear} concept \cite{Schreier06} we propose the following RKHS for regression in complex-valued formulation.
\begin{definition}\emph{Widely complex RKHS}. We denote as widely complex-valued RKHS (WRKHS) the RKHS defined by the kernel $\k: \mathcal{X}\times \mathcal{X} \rightarrow \CN$ {and a \emph{pseudo-kernel} $\pk: \mathcal{X}\times \mathcal{X} \rightarrow \CN$}, 
\begin{align}\LABEQ{WRKHS}
\f(\x\new)&=\sum_{i=1}^{\n}\alpha_i\k(\x\new,\x({i}))+\sum_{i=1}^{\n}\alpha_i\cnj\pk(\x\new,\x({i})) 
\end{align}
where $\alpha_i \in \CN$. {}
\end{definition}
The \emph{pseudo-kernel} is related to the feature map,  $\pv : \mathcal{X}\rightarrow \CN${$^{\q}$, by $\pk(\x\new,\x)=\langle \pv(\x\new),\pv(\x)\cnj\rangle$. We introduce the following definitions that we need in the next proposition.
\begin{definition}\emph{Kernels of real-imaginary parts of the feature space}. We define the kernels for the real to real, real to imaginary, imaginary to real and imaginary to imaginary parts of the feature space, respectively, as
{\begin{equation}\LABEQ{defgammas}
\begin{matrix}
\krj\rrrr(\x\new,\x)=\langle\pv\rr(\x\new),\pv\rr(\x)\rangle,
\\
\krj\rrjj(\x\new,\x)=\langle\pv\rr(\x\new),\pv\jj(\x)\rangle,
\\
\krj\jjrr(\x\new,\x)=\langle\pv\jj(\x\new),\pv\rr(\x)\rangle,
\\
\krj\jjjj(\x\new,\x)=\langle\pv\jj(\x\new),\pv\jj(\x)\rangle,
 \end{matrix}
\end{equation}
where $\krj\rrjj(\x\new,\x)=\krj\jjrr(\x,\x\new)$}. 
\end{definition}

 \begin{proposition}\emph{WRKHS reproducing properties.} \LABPRP{WRKHS}
The WRKHS can learn the real and the imaginary parts of the output as in \EQ{MOL}. 
\end{proposition}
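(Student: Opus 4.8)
The plan is to expand the complex representation \EQ{WRKHS} into its real and imaginary parts, group the terms according to the real and imaginary parts of the coefficients, and read off a two-by-two matrix-valued kernel identical in form to \EQ{MOL}. First I would write each quantity in Cartesian form, $\k=\k\rr+\j\k\jj$, $\pk=\pk\rr+\j\pk\jj$, and $\alpha_i=\alpha_{\textrm{r},i}+\j\,\alpha_{\textrm{j},i}$, and substitute them into \EQ{WRKHS}.

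The decisive algebraic step is that the conjugation $\alpha_i\cnj$ in the pseudo-kernel term flips the sign of $\alpha_{\textrm{j},i}$, which is exactly what breaks the rigid coupling of the real and imaginary channels. Collecting terms yields
\begin{equation}
\begin{bmatrix}\f\rr(\x\new)\\ \f\jj(\x\new)\end{bmatrix}=\begin{bmatrix}\k\rr+\pk\rr & \pk\jj-\k\jj\\ \k\jj+\pk\jj & \k\rr-\pk\rr\end{bmatrix}\begin{bmatrix}\alfav\rr\\ \alfav\jj\end{bmatrix},
\end{equation}
where each block is the row vector obtained by evaluating the indicated kernel at $(\x\new,\x(i))$ and summing against the coefficients. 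Identifying $\kv\rrrr=\k\rr+\pk\rr$, $\kv\rrjj=\pk\jj-\k\jj$, $\kv\jjrr=\k\jj+\pk\jj$, and $\kv\jjjj=\k\rr-\pk\rr$ reproduces precisely the structure of \EQ{MOL}, which establishes the proposition.

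To argue that the four blocks are in fact free---so that any pair of real and imaginary outputs can be learned---I would invert these relations, $\k\rr=(\kv\rrrr+\kv\jjjj)/2$, $\pk\rr=(\kv\rrrr-\kv\jjjj)/2$, $\k\jj=(\kv\jjrr-\kv\rrjj)/2$, $\pk\jj=(\kv\rrjj+\kv\jjrr)/2$, showing that any prescribed blocks are realized by a suitable pair $(\k,\pk)$. Substituting the feature-map expansions and using \EQ{defgammas} then identifies the four blocks, up to a factor that is absorbed into $\alfav$, with $\krj\rrrr$, $\krj\rrjj$, $\krj\jjrr$, $\krj\jjjj$. The main obstacle is conceptual rather than computational: one must recognise that the pseudo-kernel supplies exactly the two extra degrees of freedom missing from the strictly complex case, where $\pk=0$ forces $\kv\rrrr=\kv\jjjj$ and $\kv\rrjj=-\kv\jjrr$ and collapses the four blocks to two. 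The only delicate bookkeeping is keeping the sign conventions of $\langle\cdot,\cdot\rangle$ and of the conjugation straight.
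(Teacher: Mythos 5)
Your proof is correct and follows essentially the same route as the paper's: both expand \EQ{WRKHS} into real and imaginary parts, exploit the sign flip that $\alpha_i\cnj$ introduces in the pseudo-kernel term, and read off the four independent blocks of the matrix-valued kernel in \EQ{MOL}. The only difference is one of bookkeeping---the paper substitutes the feature maps $\pv\rr,\pv\jj$ first and obtains the blocks directly as $2\krjv\rrrr$, $2\krjv\rrjj$, $2\krjv\jjrr$, $2\krjv\jjjj$, whereas you work with $\k\rr,\k\jj,\pk\rr,\pk\jj$ and then invert; your inverted relations are exactly the paper's \EQ{covK}--\EQ{pcovK}, which it states just after the proof, so your explicit degrees-of-freedom argument is a welcome but not strictly necessary addition.
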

\begin{proof}
The output \EQ{WRKHS} as a function of the feature space can be rewritten as
\begin{align}
\f(\x\new)
&=\sum_{i=1}^{\n}\alpha_i \langle \pv(\x\new),\pv(\x({i}))\rangle + \sum_{i=1}^{\n}\alpha_i\cnj \langle \pv(\x\new),\pv(\x({i}))\cnj\rangle \nonumber.
\end{align}
In {composite} form it follows that
\begin{align}\LABEQ{WEF}
\begin{bmatrix}f\rr(\x\new)\\f\jj(\x\new) \end{bmatrix}&= 
 \begin{bmatrix}
{{\pv\rr}\new}\trs
&
-{{\pv\jj}\new}\trs
\\
{{\pv\jj}\new}\trs
&
{{\pv\rr}\new}\trs
 \end{bmatrix}
 \!\!\left(
 \begin{bmatrix}
\PHIv\rr 
&
\PHIv\jj
\\
-\PHIv\jj
&
\PHIv\rr
 \end{bmatrix}
 \!\!+\!\!
 \begin{bmatrix}
\PHIv\rr 
&
-\PHIv\jj
\\
\PHIv\jj
&
\PHIv\rr
 \end{bmatrix}
 \right)\!\!
 \begin{bmatrix}\alfav\rr\\ \alfav\jj \end{bmatrix}
 \nonumber
 \\
&= 
 2\begin{bmatrix}
{{\pv\rr}\new}\trs\PHIv\rr 
&\;\;\;&
{{\pv\rr}\new}\trs\PHIv\jj
\\
{{\pv\jj}\new}\trs\PHIv\rr 
&\;\;\;&
{{\pv\jj}\new}\trs\PHIv\jj 
 \end{bmatrix}
 \begin{bmatrix}\alfav\rr\\ \alfav\jj \end{bmatrix}
 \nonumber
 \\
&= 
 2\begin{bmatrix}
\krjv\rrrr(\x\new,\X)
&\;\;&
\krjv\rrjj(\x\new,\X) 
\\
\krjv\jjrr(\x\new,\X) 
&\;\;&
\krjv\jjjj(\x\new,\X) 
 \end{bmatrix}
 \begin{bmatrix}\alfav\rr\\ \alfav\jj \end{bmatrix}.
 \end{align}
This corresponds to the approach in \EQ{MOL} 
where the entries of the matrix in ${\K}\com(\x\new,\X)$ can be easily identified. $\blacksquare$
\end{proof}

In the proof of \PRP{WRKHS} we added both terms in \EQ{WRKHS}. 
An interesting conclusion can be drawn if we develop each term independently, 
\begin{align}\LABEQ{WEF2}
&\begin{bmatrix}f\rr(\x\new)\\f\jj(\x\new) \end{bmatrix}= 
\begin{bmatrix}
{{\pv\rr}\new}\trs\PHIv\rr+{{\pv\jj}\new}\trs\PHIv\jj  
&\; \;&
{{\pv\rr}\new}\trs\PHIv\jj-{{\pv\jj}\new}\trs\PHIv\rr 
\\
-{{\pv\rr}\new}\trs\PHIv\jj+{{\pv\jj}\new}\trs\PHIv\rr 
&\; \;&
{{\pv\rr}\new}\trs\PHIv\rr+{{\pv\jj}\new}\trs\PHIv\jj  
 \end{bmatrix}
 \begin{bmatrix}\alfav\rr\\ \alfav\jj \end{bmatrix}
\nonumber
\\
&+
\begin{bmatrix}
{{\pv\rr}\new}\trs\PHIv\rr-{{\pv\jj}\new}\trs\PHIv\jj  
&\; \;&
-{{\pv\rr}\new}\trs\PHIv\jj-{{\pv\jj}\new}\trs\PHIv\rr 
\\
+{{\pv\rr}\new}\trs\PHIv\jj+{{\pv\jj}\new}\trs\PHIv\rr 
&\; \;&
-{{\pv\rr}\new}\trs\PHIv\rr+{{\pv\jj}\new}\trs\PHIv\jj  
 \end{bmatrix}
 \begin{bmatrix}\alfav\rr\\ -\alfav\jj \end{bmatrix}.
 \end{align}
The two matrices above resemble the covariance and the pseudo-covariance in complex-valued Gaussian distributions, respectively. 
The pseudo-covariance cancels for conditions similar to those of the proper case in complex-valued random variables: if ${{\pv\rr}\new}\trs\pv\rr={{\pv\jj}\new}\trs\pv\jj$ and ${{\pv\rr}\new}\trs\pv\jj=-{{\pv\jj}\new}\trs\pv\rr$. From the result in \EQ{WEF2}, the WRKHS can be rewritten in complex-valued form as follows
\begin{equation}\LABEQ{fvw}
\f(\x\new)=\kv(\x\new,\X)\alfav+\tilde{\kv}(\x\new,\X)\alfav\cnj,
\end{equation}
%
%
{and} we next derive the value for $\alfav$ {by minimizing
the regularized empirical error. We will make use of the \emph{augmented} vector $\aug{\fv}(\x\newd)=[{\f}(\x\newd) \;{\f}\cnj(\x\newd)]\trs$:
\begin{equation}\LABEQ{augfv}
\aug{\fv}(\x\newd)
=\begin{bmatrix}
\kv(\x\new,\X) & \tilde{\kv}(\x\new,\X)\\
\tilde{\kv}\cnj(\x\new,\X) & \kv\cnj(\x\new,\X) 
\end{bmatrix}
\begin{bmatrix}\alfav\\ \alfav\cnj \end{bmatrix}
 =\aug\K(\x\new,\X)\aug\alfav,
 \end{equation}
where matrix $\aug{\K}(\x\new,\X)$ is the \emph{augmented kernel matrix}, and $\aug\alfav=[\alfav\trs \;\alfav\her]\trs$ is also an augmented vector.}
There exists a simple relation between the composite and the \emph{augmented} vector, {$\aug{\fv}(\x\newd)=\T\fv\com(\x\new)$}, where  
\begin{equation}
\T=\left[ \begin{array}{c c}
\I_{\n} & \j\I_{\n} \\
\I_{\n}  & -\j\I_{\n} \\
\end{array}\right]\in \mathbb{C}^{2n\times 2n},
\end{equation}
and $\T\T\her=\T\her\T=2\I_{2\n}$.
This simple transformation allows us to calculate the augmented vector {\EQ{augfv} from the real and imaginary parts \EQ{MOL}:
\begin{align}
\aug{\fv}(\x\newd)&=\T\K\com(\x\new,\X)\alfav\com=\T\K\com(\x\new,\X)\frac{1}{2}\T\her\T\alfav\com
\nonumber\\
&=\frac{1}{2}\T\K\com(\x\new,\X)\T\her\aug{\alfav}.
\end{align}
Hence, the augmented kernel matrix is related to $\K\com(\x\new,\X)$ as 
\begin{equation}\LABEQ{relaugKcompK}
\aug{\K}(\x\new,\X)=\frac{1}{2}\T\K\com(\x\new,\X)\T\her.
\end{equation}
And the augmented vector $\aug{\alfav}$ can be found from \EQ{compositealfa} as
\begin{align}\LABEQ{alfavwa}
\aug{\alfav}&=\T\alfav\com=\T\left( \K\com(\X,\X) +\lambda\I_{2\n} \right)\inv\yv\com\nonumber\\
&=\left(\left( \K\com(\X,\X) +\lambda\I_{2\n}\right) \T\inv\right)\inv\frac{1}{2}\T\her\T\yv\com\nonumber\\
&=\frac{1}{2}\left((\T\her)\inv\left( \K\com(\X,\X) +\lambda\I_{2\n}\right) \T\inv\right)\inv\aug{\yv}\nonumber\\
&=\frac{1}{2}\left(\frac{1}{2}\T\left( \K\com(\X,\X) +\lambda\I_{2\n}\right)\frac{1}{2}\T\her\right)\inv\aug{\yv}\nonumber\\
&=\left(\aug{\K}(\X,\X)+\lambda\I_{2\n}\right)\inv\aug{\yv}.
\end{align}
}
%
%
Note that in the general complex case, two functions {$\k(\x\new,\x)$ and $\pk(\x\new,\x)$} must be defined. {By identifying $\K\com(\x\new,\X)$ in \EQ{WEF} and substituting it in \EQ{relaugKcompK}, 
\begin{equation}
\aug{\K}(\x\new,\X)=\frac{1}{2}\T\left(2\begin{bmatrix}
\krjv\rrrr(\x\new,\X)
&\;\;&
\krjv\rrjj(\x\new,\X) 
\\
\krjv\jjrr(\x\new,\X) 
&\;\;&
\krjv\jjjj(\x\new,\X) 
 \end{bmatrix}\right)\T\her.
\end{equation}
the kernel and pseudo-kernel can be identified:
\begin{align}
\k(\x\new,\x({i}))&=\krj\rrrr(\x\new,\x({i}))+\krj\jjjj(\x\new,\x({i}))\nonumber\\&+\j\left(\krj\jjrr(\x\new,\x({i}))-\krj\rrjj(\x\new,\x({i}))\right),
\LABEQ{covK}\\
\pk(\x\new,\x({i}))&=\krj\rrrr(\x\new,\x({i}))-\krj\jjjj(\x\new,\x({i}))\nonumber\\&+\j\left(\krj\jjrr(\x\new,\x({i}))+\krj\rrjj(\x\new,\x({i}))\right).\LABEQ{pcovK}
\end{align}}
{Finally, by applying the matrix-inversion lemma to \EQ{alfavwa},
\begin{align}
\left(\aug{\K}(\X,\X)+\lambda\I_{2\n}\right)\inv&=\begin{bmatrix}
\K(\X,\X)+\lambda\I_{\n} & \pK(\X,\X)\\
\pK\cnj(\X,\X) &\K\cnj(\X,\X)+\lambda\I_{\n}\end{bmatrix}\inv
\end{align}
and substitution in \EQ{augfv}, it yields the prediction
\begin{align}\LABEQ{fWRKHS}
\f(\x\new)&=\kv(\x\new,\X)\left[\matr{P}\inv\yv- \C\inv\pK(\X,\X)\matr{P}^{-*}\yv\cnj \right]\nonumber\\
&+\pkv(\x\new,\X)\left[ \matr{P}^{-*}\yv\cnj- \C^{-*}\pK\cnj(\X,\X)\matr{P}\inv\yv\right]
\end{align}
where $\C=\left(\K(\X,\X)+\lambda\I_{\n}\right)$, and $\matr{P}=\C-\pK(\X,\X)\C^{-*}\pK\cnj(\X,\X)$. Now the pair $\alfav$, $\alfav\cnj$ are easily identifiable. }

 \section{ Strictly Complex RKHS} \LABSEC{SRKHS}
 By removing the last term in \EQ{WRKHS} we have a particular case of the WRKHS that we denote as strictly complex-valued RKHS.
\begin{definition}\emph{Strictly complex-valued RKHS}. We denote as {strictly} complex-valued RKHS (SRKHS) the RKHS defined by the kernel $\k: \mathcal{X}\times \mathcal{X} \rightarrow \CN$, 
\begin{equation}\LABEQ{SRKHS}
\f(\x\new)=\sum_{i=1}^{\n}\alpha_i\k(\x\new,\x({i}))=\sum_{i=1}^{\n}\alpha_i \langle \pv(\x\new),\pv(\x({i}))\rangle  
\end{equation}
where $\alpha_i \in \CN$ and the feature map is given by $\pv : \mathcal{X}\rightarrow \CN${$^{\q}$}.
\end{definition}
%

It can be proved to be a RKHS by using complex-valued Hilbert spaces, see \cite{Paulsen09}. 
This RKHS, that it is a straightforward application of the real-valued RKHS, is limited compared to the WRKHS as we show next. 
\begin{proposition}\emph{SRKHS is limited as RKHS.} \LABPRP{SRKHS}
The SRKHS is limited to represent any given complex-valued function. In particular, it yields a subset of the functions that WRKHS can represent. 
\end{proposition}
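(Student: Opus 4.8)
The plan is to argue at the level of the composite, matrix-valued kernel of \EQ{MOL}, showing that the SRKHS realizes only a structurally constrained subfamily of the matrix-valued kernels realized by the WRKHS. The inclusion of the SRKHS in the WRKHS is immediate: the expansion \EQ{SRKHS} is precisely \EQ{WRKHS} with the pseudo-kernel term suppressed, i.e. with $\pk\equiv 0$, so every SRKHS function is a WRKHS function. Hence the entire content of the proposition is the \emph{strictness} of this inclusion, and I would organize the proof around exhibiting a WRKHS model that no SRKHS can realize.

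First I would put the SRKHS \EQ{SRKHS} into composite form. Writing the kernel vector and coefficients as $\kv=\kv\rr+\j\kv\jj$ and $\alfav=\alfav\rr+\j\alfav\jj$ and separating the real and imaginary parts of $\f=\kv(\x\new,\X)\alfav$ gives
\begin{equation}
\begin{bmatrix}\f\rr(\x\new)\\ \f\jj(\x\new)\end{bmatrix}
=\begin{bmatrix}\kv\rr & -\kv\jj\\ \kv\jj & \kv\rr\end{bmatrix}
\begin{bmatrix}\alfav\rr\\ \alfav\jj\end{bmatrix}.
\end{equation}
Comparing with \EQ{MOL}, the SRKHS is the model whose matrix-valued kernel has \emph{equal diagonal blocks} and \emph{anti-symmetric off-diagonal blocks}. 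Equivalently, through the identification \EQ{covK}--\EQ{pcovK} of $\k$ and $\pk$ with the part-kernels of \EQ{defgammas}, the condition $\pk\equiv 0$ is exactly $\krj\rrrr=\krj\jjjj$ together with $\krj\jjrr=-\krj\rrjj$; this is the RKHS analogue of the proper (circular) condition for complex random variables.

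Next I would show that the WRKHS escapes this constraint. Inverting \EQ{covK}--\EQ{pcovK} expresses the part-kernels in terms of $(\k,\pk)$, for instance $\krj\rrrr=\tfrac12(\k\rr+\pk\rr)$ and $\krj\jjjj=\tfrac12(\k\rr-\pk\rr)$, so the map $(\k,\pk)\mapsto(\krj\rrrr,\krj\rrjj,\krj\jjrr,\krj\jjjj)$ is onto the admissible composite kernels: by \PRP{WRKHS} the WRKHS realizes an \emph{arbitrary} valid matrix-valued kernel of \EQ{MOL}, whereas the SRKHS realizes only the circular subfamily. To force strictness I would exhibit a concrete witness with $\pk\not\equiv 0$: take two independent, \emph{different} real-valued kernels $G_1\neq G_2$ for the two outputs, i.e. the block-diagonal matrix-valued kernel $\mathrm{diag}(G_1,G_2)$. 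It is positive semidefinite, hence a valid WRKHS kernel, with real kernel $\k=\tfrac12(G_1+G_2)$ and nonzero real pseudo-kernel $\pk=\tfrac12(G_1-G_2)$; yet it violates $\krj\rrrr=\krj\jjjj$ and therefore cannot arise from any SRKHS.

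The main obstacle is converting this structural gap into a genuine gap between the \emph{represented function classes}, ruling out that a circular-structured kernel could somehow match the same targets. I would close this by regression: for the witness above choose training data whose real and imaginary parts demand the two distinct similarity structures $G_1\neq G_2$ (as in the scenarios of \SEC{Exp}); the WRKHS fits them through \EQ{fWRKHS}, while any SRKHS is tied by $\krj\rrrr=\krj\jjjj$ to a single Gram structure shared by both parts and so cannot reproduce the required fit. This produces a regression problem solved by a WRKHS but by no SRKHS, establishing that the SRKHS function class is a proper subset and completing the proof.
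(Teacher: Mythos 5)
Your proof is correct and follows essentially the same route as the paper: both reduce \EQ{SRKHS} to composite form and observe that the resulting matrix-valued kernel is forced to have equal diagonal blocks and skew-symmetric off-diagonal blocks, a structural constraint that the WRKHS composite kernel in \EQ{WEF} does not share. You go somewhat further than the paper by exhibiting a concrete witness kernel $\mathrm{diag}(G_1,G_2)$ with $G_1\neq G_2$ and by explicitly flagging, and attempting to close, the gap between a difference in kernel structure and a genuine difference in representable function classes --- a step the paper's own proof leaves implicit.
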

\begin{proof}
By rewriting the output {\EQ{SRKHS} in composite}  form, i.e. real and imaginary parts stacked in vector form, 
\begin{align}\LABEQ{PRJ}
\begin{bmatrix}f\rr(\x\new)\\f\jj(\x\new) \end{bmatrix}= 
{\begin{bmatrix}
{{\pv\rr}\new}\trs\PHIv\rr+{{\pv\jj}\new}\trs\PHIv\jj  
&\; \;&
{{\pv\rr}\new}\trs\PHIv\jj-{{\pv\jj}\new}\trs\PHIv\rr 
\\
-{{\pv\rr}\new}\trs\PHIv\jj+{{\pv\jj}\new}\trs\PHIv\rr 
&\; \;&
{{\pv\rr}\new}\trs\PHIv\rr+{{\pv\jj}\new}\trs\PHIv\jj  
 \end{bmatrix} }
 \begin{bmatrix}\alfav\rr\\ \alfav\jj \end{bmatrix}
 \end{align}
where ${{\pv\rr}\new}=\pv\rr(\x\new)$, ${\pv\jj}\new=\pv\jj(\x\new)$, {$\PHIv\rr=\PHIv\rr(\X)$}, {$\PHIv\jj=\PHIv\jj(\X)$} and {$\PHIv(\X)$} is an $m \times \n$ matrix  whose $i$-th column is $\pv(\x({i}))$. 
It can be observed that the diagonal {blocks} of the matrix above have the same value {while the off-diagonal ones have opposite sign}. Hence, it cannot provide the same solutions than the WKRHS in \EQ{WRKHS} where in the general case $\kv\rrrr(\x\new,\X)\neq\kv\jjjj(\x\new,\X)$ and $\kv\rrjj(\x\new,\X)\neq-\kv\jjrr(\x\new,\X)$. 
$\blacksquare$
\end{proof}

The previous proposition is a consequence of the fact that linear operations applied to a real vector formed with the real and imaginary parts are not generally translated into linear operations applied to its complex counterpart.

\subsection{Kernel structure}
We next study the structure of the kernel {for} the SRKHS. %
\begin{proposition}\emph{Kernel in SRKHS.} \LABPRP{SRKHS2}
The solution in \EQ{MOL} with $k\rrrr(\x\new,\x({i}))=k\jjjj(\x\new,\x({i}))=k\rrrr(\x({i}),\x\new)$ and $\k\rrjj(\x\new,\x({i}))=-\k\rrjj(\x({i}),\x\new)$ yields the SRKHS in \EQ{SRKHS} with kernel 
\begin{equation}\LABEQ{PK}
k(\x\new,\x({i}))=k\rrrr(\x\new,\x({i}))-\j\k\rrjj(\x\new,\x({i})).
\end{equation}
\end{proposition}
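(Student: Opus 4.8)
The plan is to start from the general composite kernel form \EQ{MOL} and show that the three hypotheses collapse its matrix-valued kernel onto the special block structure already exhibited by the SRKHS in \EQ{PRJ}, so that the two real outputs can be recombined into a single complex expression of the form \EQ{SRKHS}. First I would write the matrix-valued kernel $\K\com(\x\new,\X)$ of \EQ{MOL} block by block and substitute the assumptions. The condition $\k\rrrr=\k\jjjj$ makes the two diagonal blocks coincide. For the off-diagonal blocks I would combine the structural identity $\k\rrjj(\x\new,\x)=\k\jjrr(\x,\x\new)$ from \EQ{defgammas} with the antisymmetry $\k\rrjj(\x\new,\x({i}))=-\k\rrjj(\x({i}),\x\new)$ to deduce $\k\jjrr(\x\new,\x({i}))=-\k\rrjj(\x\new,\x({i}))$, i.e. the lower-left block is the negative of the upper-right block. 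This is exactly the $\begin{bmatrix}A & B\\ -B & A\end{bmatrix}$ pattern of \EQ{PRJ}, which \PRP{SRKHS} identified as the structural signature of an SRKHS.

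The second step is the algebraic recombination. From the reduced blocks I would read off $\f\rr=\k\rrrr\alfav\rr+\k\rrjj\alfav\jj$ and $\f\jj=-\k\rrjj\alfav\rr+\k\rrrr\alfav\jj$, form the complex output $\f=\f\rr+\j\f\jj$, and factor out the complex coefficient vector $\alfav=\alfav\rr+\j\alfav\jj$. A short manipulation gives $\f(\x\new)=(\k\rrrr-\j\k\rrjj)(\alfav\rr+\j\alfav\jj)$, which is precisely \EQ{SRKHS} with complex coefficients $\alpha_i$ and the kernel \EQ{PK}, namely $\k(\x\new,\x({i}))=\k\rrrr(\x\new,\x({i}))-\j\k\rrjj(\x\new,\x({i}))$.

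The step I expect to require the most care is verifying that this recombined $\k$ is genuinely admissible as an SRKHS kernel, rather than a mere notational convenience: it must satisfy the Hermitian property $\k(\x\new,\x)=\k\cnj(\x,\x\new)$ so that it can arise as an inner product $\langle\pv(\x\new),\pv(\x)\rangle$ of a single complex feature map as in \EQ{SRKHS}. This is where the hypotheses pay off a second time. Since $\k\rrrr$ and $\k\rrjj$ are real-valued inner products of real feature maps, conjugating and swapping arguments gives $\k\cnj(\x,\x\new)=\k\rrrr(\x,\x\new)+\j\k\rrjj(\x,\x\new)$; then the symmetry $\k\rrrr(\x,\x\new)=\k\rrrr(\x\new,\x)$ together with the antisymmetry $\k\rrjj(\x,\x\new)=-\k\rrjj(\x\new,\x)$ turns this back into $\k\rrrr(\x\new,\x)-\j\k\rrjj(\x\new,\x)=\k(\x\new,\x)$. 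Thus the two assumptions are not ad hoc: they are exactly what is needed both to force the SRKHS block pattern in the first step and to guarantee the Hermitian symmetry of the resulting kernel in the last, completing the identification of \EQ{MOL} with the SRKHS of \EQ{SRKHS}.
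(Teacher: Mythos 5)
Your argument is correct, but it runs in the converse direction to the paper's own proof. The paper starts from the SRKHS \EQ{SRKHS}: it writes $f(\x\new)=\pv(\x\new)\trs\PHIv(\X)\cnj\alfav$, splits it into real and imaginary parts to obtain \EQ{PRJ}, defines $\kv\rrrr$ and $\kv\rrjj$ as the combinations in \EQ{k12}, reads off the block pattern \EQ{k12b} (whence $k\rrrr=k\jjjj$), and only at the end verifies the stated symmetries from the definitions in \EQ{defgammas}. You instead take \EQ{MOL} together with those symmetries as hypotheses, collapse the matrix-valued kernel to the $\begin{bmatrix}A & B\\ -B & A\end{bmatrix}$ pattern of \EQ{PRJ}, and recombine $\f\rr+\j\f\jj=(\k\rrrr-\j\k\rrjj)(\alfav\rr+\j\alfav\jj)$; that is the implication the proposition literally asserts, so your direction is arguably the more faithful one. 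What the paper's route buys is that admissibility of the complex kernel is automatic, since it is constructed as $\langle\pv(\x\new),\pv(\x)\rangle$ from a feature map at the outset. In your direction that is the one residual point: you correctly flag admissibility as the delicate step, but you only check Hermitian symmetry, which is necessary and not sufficient for $\k$ to arise from a complex feature map; you should also remark that positive semi-definiteness of $\k=\k\rrrr-\j\k\rrjj$ follows from that of the composite matrix-valued kernel with the assumed block structure (a one-line computation with complex coefficients $c=c\rr+\j c\jj$). With that remark added, the two arguments are complementary halves of the same equivalence, and yours has the small additional merit of making explicit where each hypothesis is used.
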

\begin{proof}
First, we rewrite \EQ{SRKHS} in vector form as
\begin{equation}\LABEQ{fvs}
f(\x\new)=\pv(\x\new)\trs\Phi(\X)\cnj\alfav=\kv(\x\new,\X)\alfav
\end{equation}
and decompose it into real and imaginary parts as in \EQ{PRJ}.
Define
\begin{equation}\LABEQ{k12}
\begin{matrix}
\kv\rrrr(\x\new,\X)= \krjv\rrrr(\x\new,\X)+\krjv\jjjj(\x\new,\X),
\\
\kv\rrjj(\x\new,\X)=\krjv\rrjj(\x\new,\X)- \krjv\jjrr(\x\new,\X),
 \end{matrix}
  \end{equation}
{where $\krjv\rrrr(\x\new,\X)=[\krj\rrrr(\x\new,\x({1})),\cdots,\krj\rrrr(\x\new,\x({\n}))]$, and we have analogous definitions for $\krjv\jjjj(\x\new,\X)$, $\krjv\rrjj(\x\new,\X)$ and $\krjv\jjrr(\x\new,\X)$. The terms in \EQ{k12} can be identified in \EQ{PRJ} as}
\begin{equation}\LABEQ{k12b}
\begin{bmatrix}f\rr(\x\new)\\f\jj(\x\new) \end{bmatrix}= 
\begin{bmatrix}
\kv\rrrr(\x\new,\X)
&
\kv\rrjj(\x\new,\X)
\\
-\kv\rrjj(\x\new,\X)
&
\kv\rrrr(\x\new,\X)
 \end{bmatrix}\begin{bmatrix}\alfav\rr\\ \alfav\jj \end{bmatrix}
 \end{equation}
where it can be concluded that $k\rrrr(\x\new,\x({i}))=k\jjjj(\x\new,\x({i}))$.
{Going back to \EQ{SRKHS} it follows that $\k(\x\new,\x({i}))=\k\rrrr(\x\new,\x({i}))-\j\k\rrjj(\x\new,\x({i}))$. 
Finally, from the definitions of $k\rrrr(\x\new,\x({i}))$ and $\kv\rrjj(\x\new,\X)$ in \EQ{k12} and definitions in \EQ{defgammas}, it is easy to check the symmetries $k\rrrr(\x\new,\x({i}))=k\rrrr(\x({i}),\x\new)$ and $\k\rrjj(\x\new,\x({i}))=-\k\rrjj(\x({i}),\x\new)$.} 
\end{proof}$\blacksquare$

Note first that by minimizing
the regularized empirical error $\alfav$ in \EQ{fvs} we have 
\begin{equation}\LABEQ{alfas}
\alfav=\left( \K(\X,\X) +\lambda\I{_{\n}} \right)\inv\yv,
\end{equation}
where $[\K(\X,\X)]_{r,s}=\k(\x(r),\x(s))$. 
 Also, it is important to remark that a SRKHS is a particular case of the solution in \EQ{MOL}, even if the kernel is complex-valued. 
 By analogy with covariances of complex-valued random variables, this formulation resembles the \emph{proper} case. In the proper case real and imaginary parts exhibit the same covariance while the covariance of the real part to the imaginary part is minus the covariance of the imaginary part to the real one {\cite{Schreier06}}. 
%



\subsection{Connection to previous approaches}


It is straightforward to show that previous approaches in \cite{OgunfunmiP11,Bouboulis11,Tobar12} belong to the SRKHS type with kernels as described in the following section and, therefore, are limited compared to the WRKHS.
An interesting singular case of this SRKHS formulation corresponds to the scenario where the real and imaginary parts are not related. In this case, 
$\kv\jjrr=-\kv\rrjj=0$ and the formulation yields, 
\begin{equation}\LABEQ{f1f2}
\begin{bmatrix}f\rr(\x\new)\\f\jj(\x\new) \end{bmatrix}= 
\begin{bmatrix}
\kv\rrrr(\x\new,\X)\alfav\rr
\\
\kv\rrrr(\x\new,\X)\alfav\jj
 \end{bmatrix},
 \end{equation}
where the kernel in \EQ{SRKHS} is real.  This is the simple \emph{complexification} described in \cite{Aronszajn50,Paulsen09} based on building a complex Hilbert space considering functions of the form $\f(\cdot)=\f_r(\cdot)+\j \f_j(\cdot)$ where $\f\rr$ and $\f\jj$ are in class $\mathcal{F}$. 
Note that $\f\rr$ and $\f\jj$ must be real for every input, and that being in the same class implies the same real-valued reproducing kernel for the real and the imaginary parts \cite{Aronszajn50}. {This procedure is limited in that it amounts to learning the real and imaginary parts independently but with the same kernel}.





\section{Kernel Design}\LABSEC{Ker}
The kernel is a key tool in RKHS: it encodes our assumptions about the function that we wish to learn. The kernel measures \emph{similarity} between inputs.
In this section we first analyze kernels, including previous proposals. Then we face the design of pseudo-kernels, for WRKHS.

{\subsection{Kernel}}

In \cite{Steinwart06,Bouboulis11} a complex-valued Gaussian kernel approach is proposed. 
The kernel used was as an extension of the real Gaussian kernel: 
\begin{align}\LABEQ{RK}
\k_{\CN}(\x,\x')=\exp\left(-({\x-\x'^{*})}^\top(\x-\x'^{*})/\gamma\right).
\end{align}
If we separate the real and imaginary parts, $\x=\x\rr+\j\x\jj$ and $\x'=\x'\rr+\j\x'\jj$, then it follows that
\begin{align}\LABEQ{CGK}
\k_{\CN}(\x,\x')&=\exp\left(-(|\x\rr-\x\rr'|^2-|\x\jj+\x\jj'|^2)/\gamma\right)\nonumber\\
&\cdot\exp\left(-2\j(\x\rr-\x\rr')\trs(\x\jj+\x\jj')/\gamma\right)\nonumber\\
&=\exp\left(-|\x\rr-\x\rr'|^2/\gamma\right)\exp\left(|\x\jj+\x\jj'|^2/\gamma\right)\nonumber\\
&\cdot\left(\cos(2(\x\rr-\x\rr')\trs(\x\jj+\x\jj')/\gamma)\right.\nonumber\\&\left.-\j\sin(2(\x\rr-\x\rr')\trs(\x\jj+\x\jj')/\gamma)\right),
\end{align}
where $|\cdot|$ is the $\ell^2$-norm. 

Another complex-valued kernel was proposed in \cite{Tobar12} also within a SRKHS. The authors of the proposal remarked that the kernel in \EQ{RK} does not have the intuitive physical meaning of a measure of similarity of the samples and propose the so-called independent kernel: 
\begin{align}\LABEQ{indepCGK}
\k_{\textrm{ind}}(\x,\x')&=\kappa_{\Rext}\left(\x\rr,\x'\rr\right)+\kappa_{\Rext}\left(\x\jj,\x'\jj\right)\\&+\j\left(\kappa_{\Rext}\left(\x\rr,\x'\jj\right)-\kappa_{\Rext}\left(\x\jj,\x'\rr\right)\right),
\end{align}
where $\kappa_{\Rext}$ is a real kernel of real inputs.

These two kernels, $\k_\CN$ and $\k_\textrm{ind}$, were introduced in \cite{Bouboulis11} and \cite{Tobar12} as part of a machine of the type in \EQ{SRKHS}.  First conclusion, in the view of \PRP{SRKHS}, is that these methods belong to SRKHS. 
For SRKHS $\kv\rrrr(\x,\x')=\kv\jjjj(\x,\x')$ the pseudo-kernel cancels and the kernel matrix is limited to have a particular symmetry. In the view of \PRP{SRKHS2}, both kernels $\k_\CN$ and $\k_\textrm{ind}$ are of the form given in \EQ{PK}, $k(\x,\x')=k\rrrr(\x,\x')-\j\k\rrjj(\x,\x')$, with $k\rrrr(\x,\x')=k\rrrr(\x',\x)$ and $\k\rrjj(\x,\x')=-\k\rrjj(\x',\x)$. 
%
{These symmetries for one-dimensional complex-valued inputs are illustrated in \FIG{figkernelBoub} and \FIG{figkernelTobar}. In these figures it can be observed the particular way these kernels measure \emph{similarity} between inputs}. The kernel $\k_\CN$ in \EQ{RK}, measures similarities between real parts of the inputs with $|\x\rr-\x\rr'|^2$,} while for the imaginary ones it uses $|\x\jj+\x\jj'|^2$. Also, it is not stationary and has an oscillatory behavior. We illustrate these features in the example in Fig. \ref{fig:figkernelBoub}. The exponent in the kernel may easily grow large and positive as can be observed in the example depicted in Fig. \ref{fig:figkernelBoub3}. This might cause numerical problems in the learning algorithms. 
{On the other hand, the kernel $\k_\textrm{ind}$ in \EQ{indepCGK} has a very particular structure since it follows the structure in \EQ{PK} but it is not written as a function of the complex-valued inputs, but as a function of the real and imaginary parts of the inputs.} This way of measuring the similarity between the  inputs produces a particular {\it cross}-shape, as shown in the example in Fig. \ref{fig:figkernelTobar}. 
Again, notice that because of the high constant values along the real and imaginary axis this kernel may be not useful for a wide range of systems. 


\begin{figure}[htb!]
\begin{center}
\includegraphics[width=8cm,draft=false]{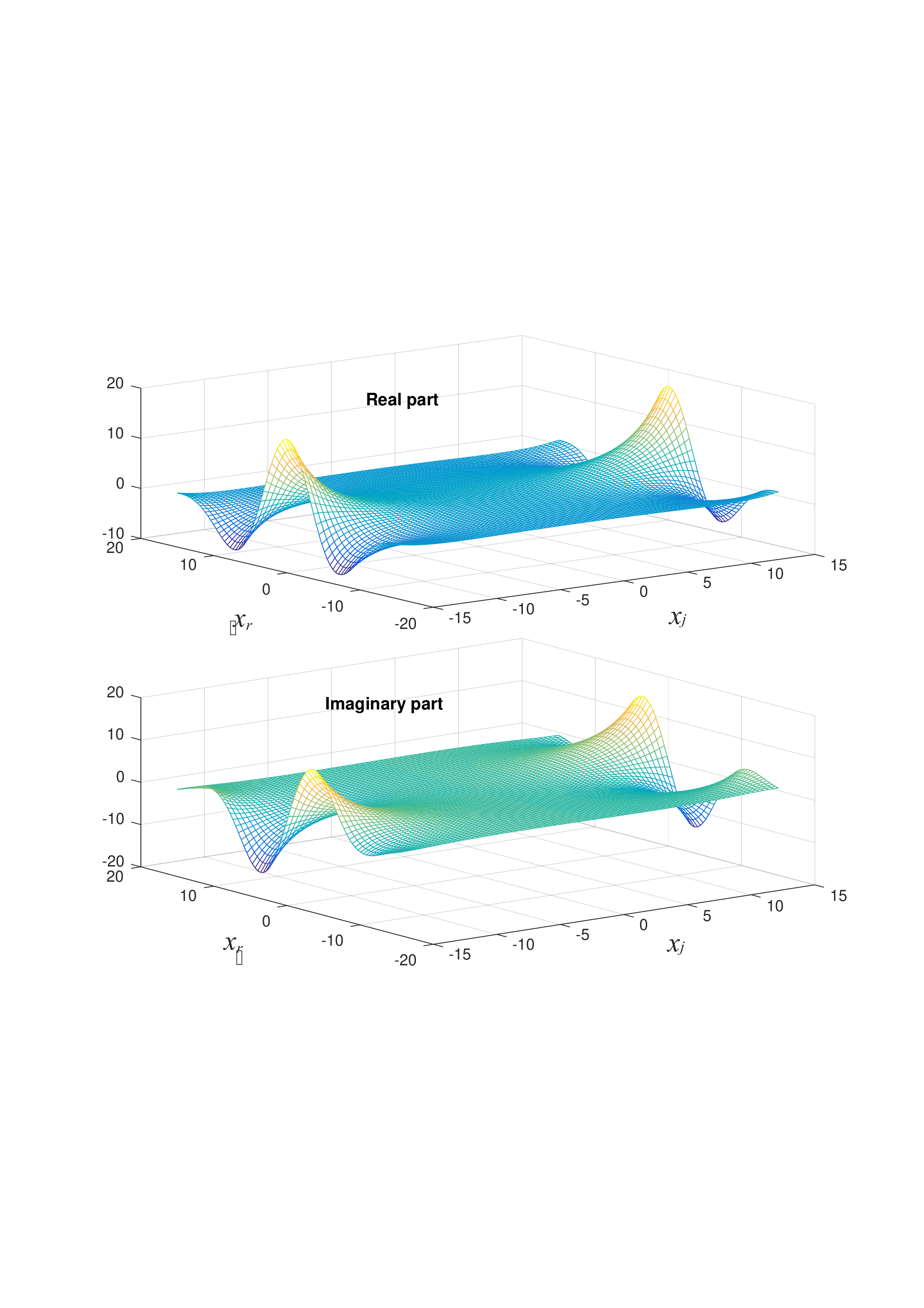}
\end{center}
\caption{Real and imaginary parts of $\k_{\CN}(x,x')$ when $x=0+\j0$ and $x'$ with real and imaginary parts in $[-15,15]$, $\gamma=80$.}
\LABFIG{figkernelBoub}
\end{figure}

\begin{figure}[htb!]
\begin{center}
\includegraphics[width=8cm,draft=false]{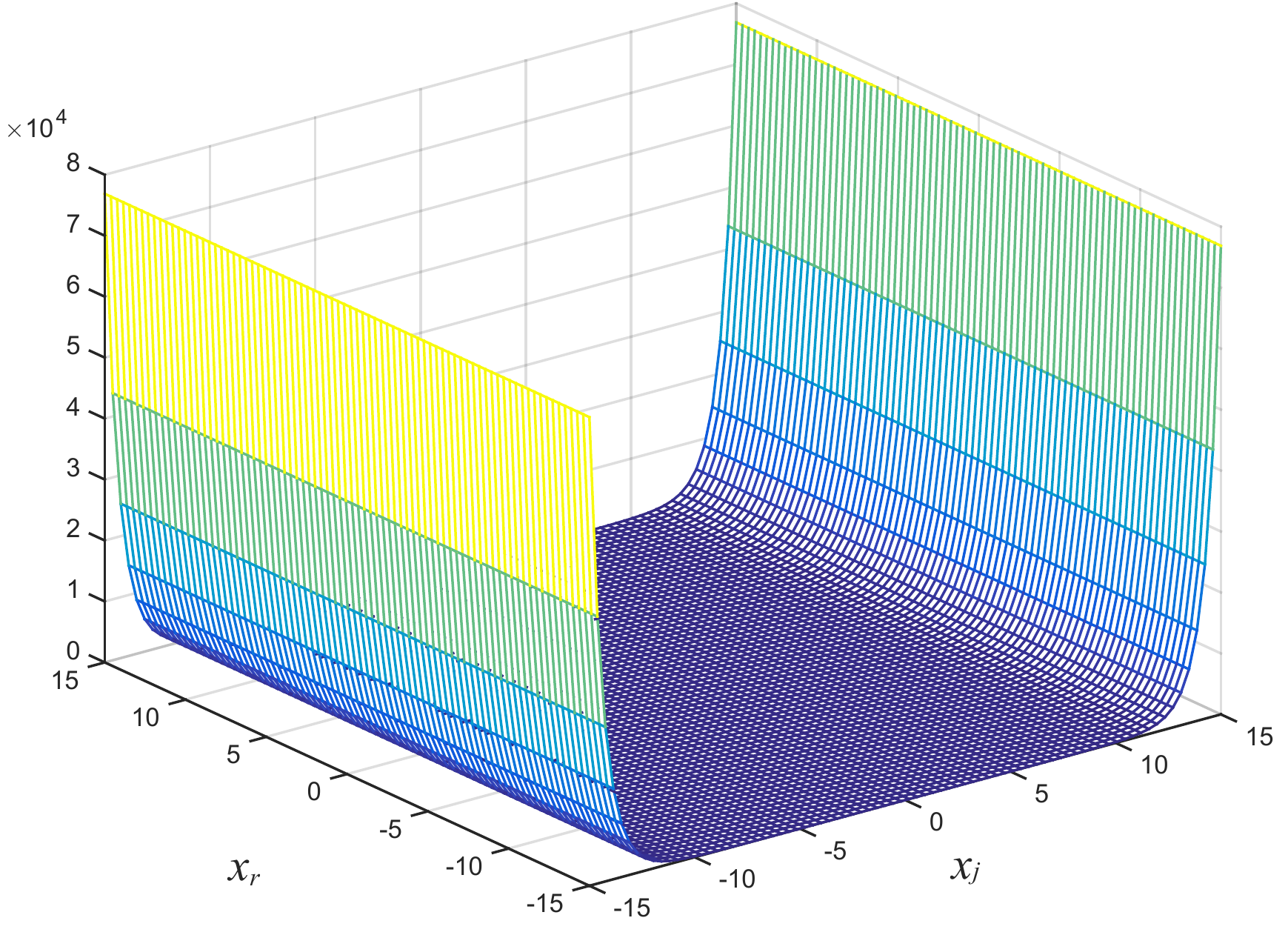}
\end{center}
\caption{$\k_{\CN}(x,x')$ when $x=x'$ with real and imaginary parts in $[-15,15]$, $\gamma=80$.}
\LABFIG{figkernelBoub3}
\end{figure}

\begin{figure}[htb!]
\begin{center}
\includegraphics[width=8cm,draft=false]{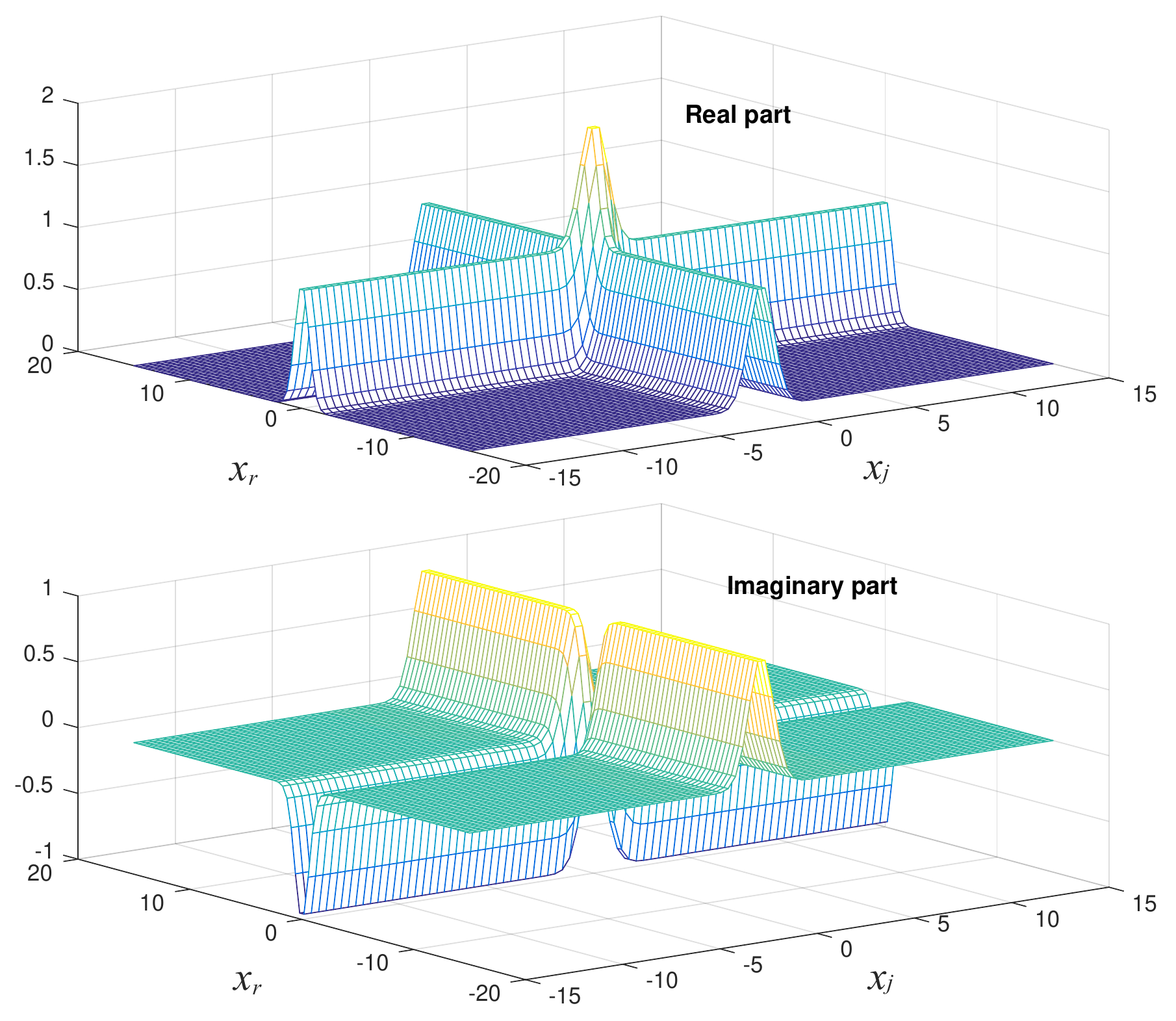}
\end{center}
\caption{Real and imaginary parts of $\k_\textrm{ind}(x,x')$ when $x=0+\j0$ and $x'$ with real and imaginary parts in $[-15,15]$, $\gamma=0.8$.}
\LABFIG{figkernelTobar}
\end{figure}




Our conclusion is that, in WRKHS with null pseudo-kernel, enforcing a complex-valued kernel is counterproductive unless you identify, for the particular problem at hand, a skew-symmetry of the kind {$\k\rrjj(\x,\x')=-\k\rrjj(\x',\x)$. 
Note that a null pseudo-kernel and a real-valued kernel yields the complexification case in \EQ{f1f2}. 
{The way that similarity is measured and the structure of the kernel function are two important issues to take into account when designing the kernel. Regarding similarity, since we are working in a model with complex-valued inputs, we propose using the difference between complex-valued inputs, $\mathbf{d}_\x=(\x-\x')$. In addition, if an isotropic behavior is desired, functions $k\rrrr(\x,\x')$ and $k\rrjj(\x',\x)$ in \EQ{PK} could better rely on the inner product $\mathbf{d}_\x\her\mathbf{d}_\x=(\x-\x')\her(\x-\x')$ rather than on expressions of real and imaginary parts. 
%
We propose as an isotropic stationary kernel the following adaptation of the real-valued Gaussian kernel depending on the inner product $\mathbf{d}_\x\her\mathbf{d}_\x$:
\begin{equation} \LABEQ{expkernel}
k\rrrr(\x,\x')=\k_{G}(\x,\x')=\exp\left(-\mathbf{d}_\x\her\mathbf{d}_\x/\gamma\right).
\end{equation}
 An example of this kernel is shown in \FIG{figkernelUS}. We will use this kernel for the equalization problem in the Experiments Section, where usually real and imaginary parts in a digital communication constellations are independent but exhibit similar properties. 
\begin{figure}[tb!]
\begin{center}
\includegraphics[width=8cm,draft=false]{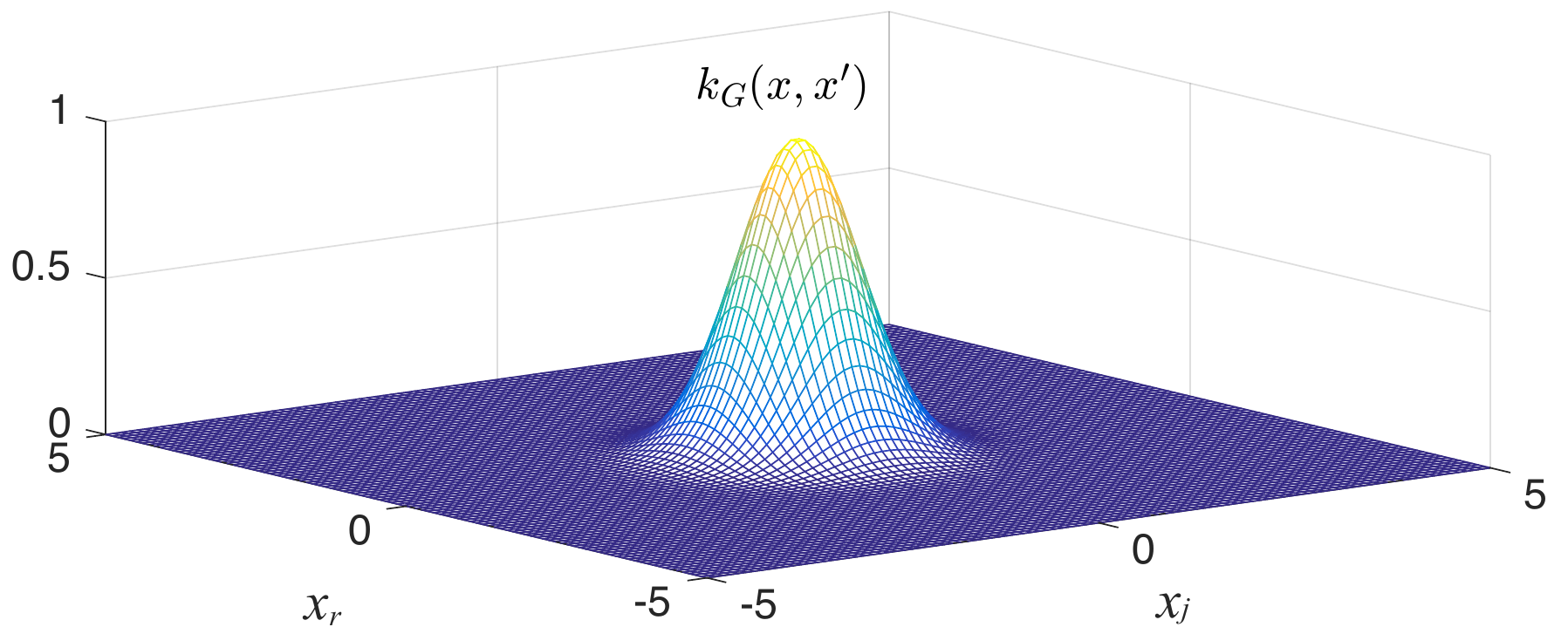}
\end{center}
\caption{Examples for $\k_{G}(x,x')$ with null imaginary part in \EQ{expkernel} when $x=0+\j0$ and $x'$ with real and imaginary parts in $[-5,5]$, $\gamma=0.8$.}
\LABFIG{figkernelUS}
\end{figure}

\subsection{Kernel and Pseudo-Kernel}
In WRKHS we have both the kernel, {$\k(\x,\x')$}, and the pseudo-kernel, {$\pk(\x,\x')$. These kernels can be written as functions of the kernels of the real part, $\krj\rrrr(\x,\x')$, the imaginary part, $\krj\jjjj(\x,\x')$ and the real-imaginary parts $\krj\jjrr(\x,\x')$ and $\krj\rrjj(\x,\x')$, as in \EQ{covK}-\EQ{pcovK}}. Therefore the design is quite open. We bring here two particular but interesting cases. First one is the scenario where real and imaginary parts are independent but exhibit different properties and different kernels should be used. As a second case we design a kernel for the scenario where real and imaginary parts are not independent.

\subsubsection{ Different kernels for the real and imaginary parts}
If the real and imaginary parts need different kernels we may use a real-valued kernel for the real part, {$\krj\rrrr(\x,\x')$}, and another real-valued design for the imaginary one, {$\krj\jjjj(\x,\x')$}, assuming independence between real and imaginary parts. The kernels in \EQ{covK}-\EQ{pcovK} yield
{\begin{align}
\k(\x\new,\x({i}))&=\krj\rrrr(\x\new,\x({i}))+\krj\jjjj(\x\new,\x({i})),\nonumber\\
\pk(\x\new,\x({i}))&=\krj\rrrr(\x\new,\x({i}))-\krj\jjjj(\x\new,\x({i})).\LABEQ{Ks}
\end{align}}
Note that this scenario is simple, but the SRKHS is not a valid framework to explain it in a complex-valued formalism and a pseudo-kernel is needed. Besides, the resulting kernels are real-valued.

\subsubsection{{Non-independent}  real and imaginary parts}
{This scenario} can be easily handled by using the concept of \emph{separable kernel} and \emph{sum of separable} (SoS) kernels \cite{Alvarez12}. In the complex case a mixed effect regularizer (MER) translates into 
{\begin{align}
{\k}(\x\new,\x(i))&=2\sum_{q=1}^Q \k^{(q)}(\x\new,\x(i)) 
,\LABEQ{Ksos}\\
{\pk}(\x\new,\x(i))&=2\j\sum_{q=1}^Q \omega^{(q)} \pk^{(q)}(\x\new,\x(i)), 
\LABEQ{pKsos}
\end{align}}
if we choose $\k^{(q)}(\x\new,\x(i))$ to be real-valued kernels of complex-valued inputs, where 
$0<\omega^{(q)}<1$. 

\section{experiments}\LABSEC{Exp}
\subsection{Learning with WRKHS}

To illustrate the learning with WRKHS we bring here two synthetic experiments. In the first one we learn with a different similarity measurement for the real and the imaginary parts. We use the WRKHS solution in \EQ{fWRKHS} with the kernel and pseudo-kernel in \EQ{Ks}. In the second scenario we exploit the relation between the real and imaginary parts of the output, using \EQ{Ksos}-\EQ{pKsos}.

\subsubsection{Real and imaginary parts}

We propose to learn a non-linear function of the type $y(x)=y\rr(x)+\j y\jj(x)$, where $x=x\rr+\j x\jj$ and in this experiment
\begin{align}
y\rr(x)=&\sum_{r=-1}^1\sinc(1.2 x\rr+2r)\cdot \sinc(1.2 x\jj-2r)\nonumber\\
y\jj(x)=&\sinc(0.2x\jj-1.5).
\end{align}
We generate $\n=200$ random training samples of $y(x)$ in the range $[-5,5]$. In \EQ{Ks}, $\k\rrrr(x,x')$ and $\k\jjjj(x,x')$ are $\k_{G}(x,x')$ in \EQ{expkernel} with $\gamma=1$ and $\gamma=3.5$, respectively. Since the imaginary part of the output has a softer behavior, the optimal hyperparameter of the kernel is 
larger than for the real part. The result is included in \FIG{WRKHSi}, where the training samples are plotted in red circles. Without a pseudo-kernel we can not use a different similarity measurement for the real and the imaginary parts. In \FIG{WRKHSni} we include the result of the learning of the imaginary part if the same kernel with $\gamma=1$ is used for both the real and the imaginary parts. In this case we observe that the learning of the imaginary part exhibits quite a larger error. The overall mean square error (MSE) with the WRKHS is $-54.9$ dB while WRKHS with null pseudo-kernel is $-38.8$ dB. 

\begin{figure}[tb!]
\begin{center}
\includegraphics[width=9cm]{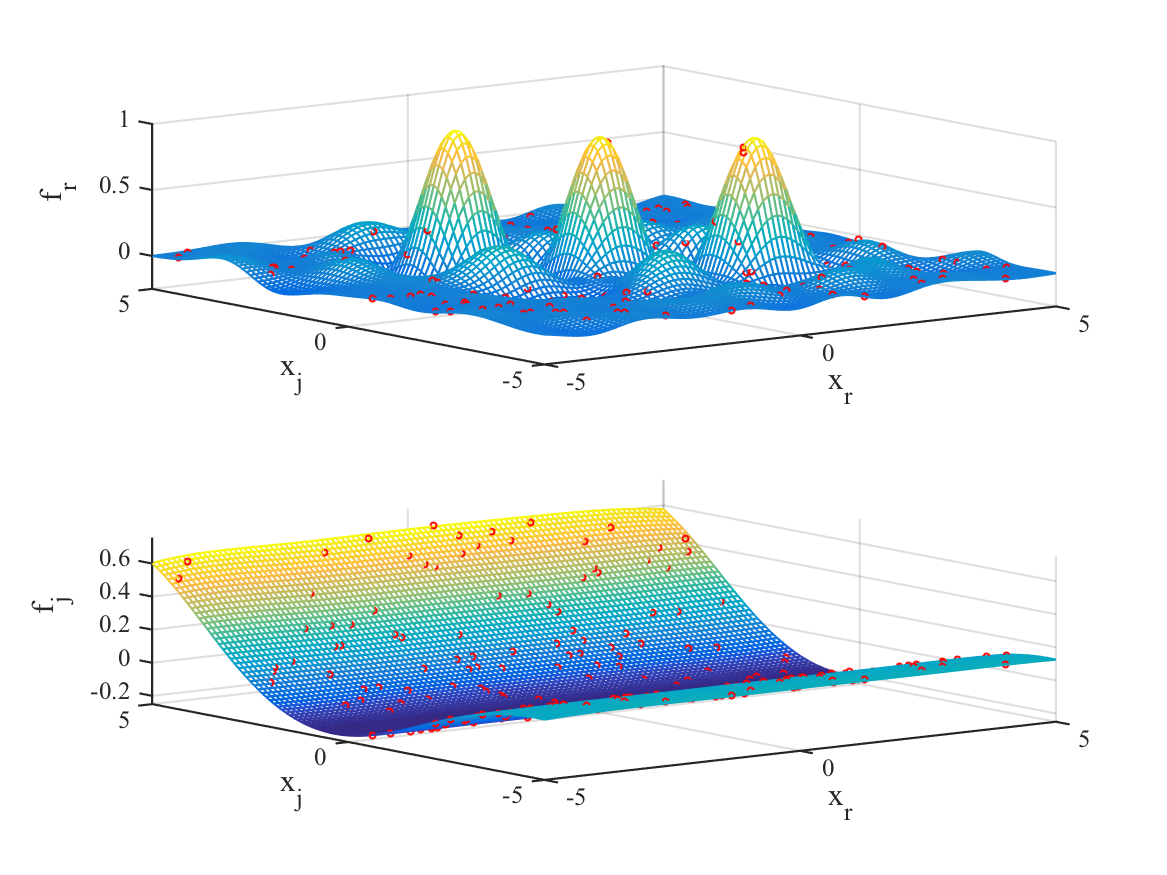}
\end{center}
\caption{Real (top) and imaginary (bottom) parts of the WRKHS estimation $\f(\x)$ versus the real and imaginary parts of the input. The training samples are depicted as red circles.}
\LABFIG{WRKHSi}
\end{figure}

\begin{figure}[tb!]
\begin{center}
\includegraphics[width=9cm,draft=false]{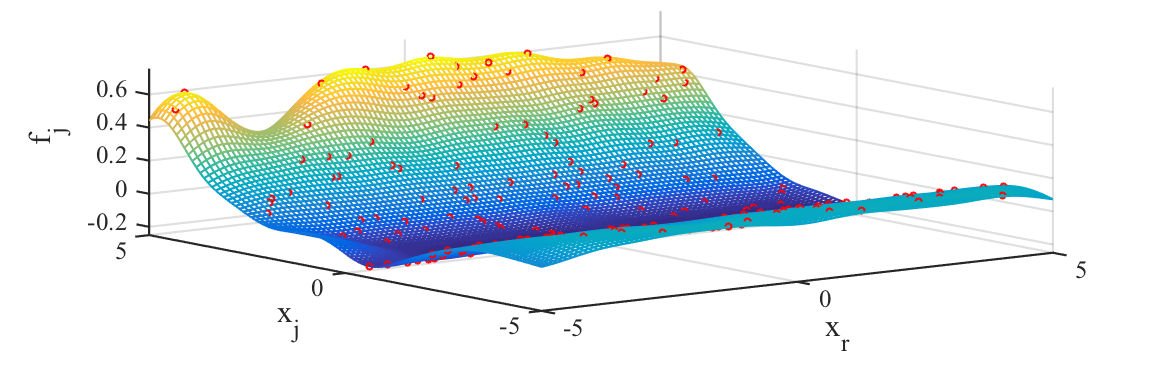}
\end{center}
\caption{Imaginary part of the WRKHS estimation $\f(\x)$ with null pseudo-kernel versus the real and imaginary parts of the input. 
Same kernel for the real and imaginary parts are used. 
The training samples are depicted as red circles. }
\LABFIG{WRKHSni}
\end{figure}

\subsubsection{{Non-independent}real and imaginary parts} 

We propose to learn a non-linear function of the type $y(x)=y\rr(x)+\j y\jj(x)$, where $y\rr(x)=z\rr + \omega z\jj$, $y\jj(x)=z\jj + \omega z\rr$, 
\begin{align}
z\rr(x)=&\sinc(0.5 x\rr)\cdot \sinc(0.5 x\jj)\nonumber\\
z\jj(x)=&0.1\cdot \sinc(0.3x\jj).
\end{align}
and $x=x\rr+\j x\jj$.
We generate $\n=200$ random training samples of $y(x)$ in the range $[-5,5]$. In \EQ{Ks} we use MER and SoS but with just one term, $Q=1$, setting $\omega=\omega^{(1)}=0.3$. The kernel $\k^{(1)}(\x,\x')$ in \EQ{Ksos}-\EQ{pKsos} is the Gaussian, $\k_{G}(x,x')$, in \EQ{expkernel} with $\gamma=2$. The result of the learning is depicted in \FIG{WRKHSc}. Note that the real and imaginary parts are similar but not equal. The MSE of this solution is $-45.3$ dB, while the solution for $\omega^{(1)}=0$, that corresponds to SRKHS, is $-41.8$ dB.

\begin{figure}[tb!]
\begin{center}
\includegraphics[width=9cm]{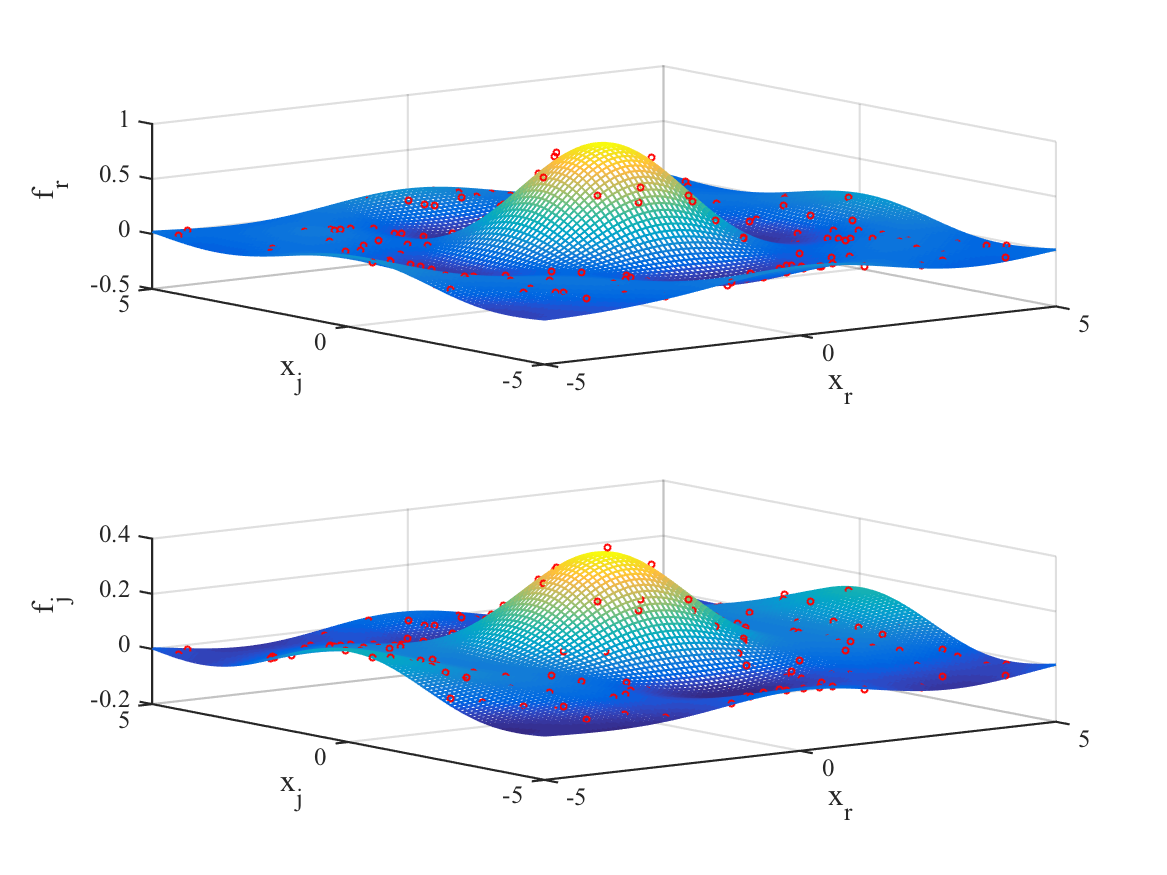}
\end{center}
\caption{Real (top) and imaginary (bottom) parts of the WRKHS estimation $\f(\x)$ versus the real and imaginary parts of the input. A separable kernel was used with $Q=1$ and $\omega^{(1)}=0.3$. The training samples are depicted as red circles.}
\LABFIG{WRKHSc}
\end{figure}

\begin{figure}[tb!]
\begin{center}
\includegraphics[width=8.7cm, draft=false]{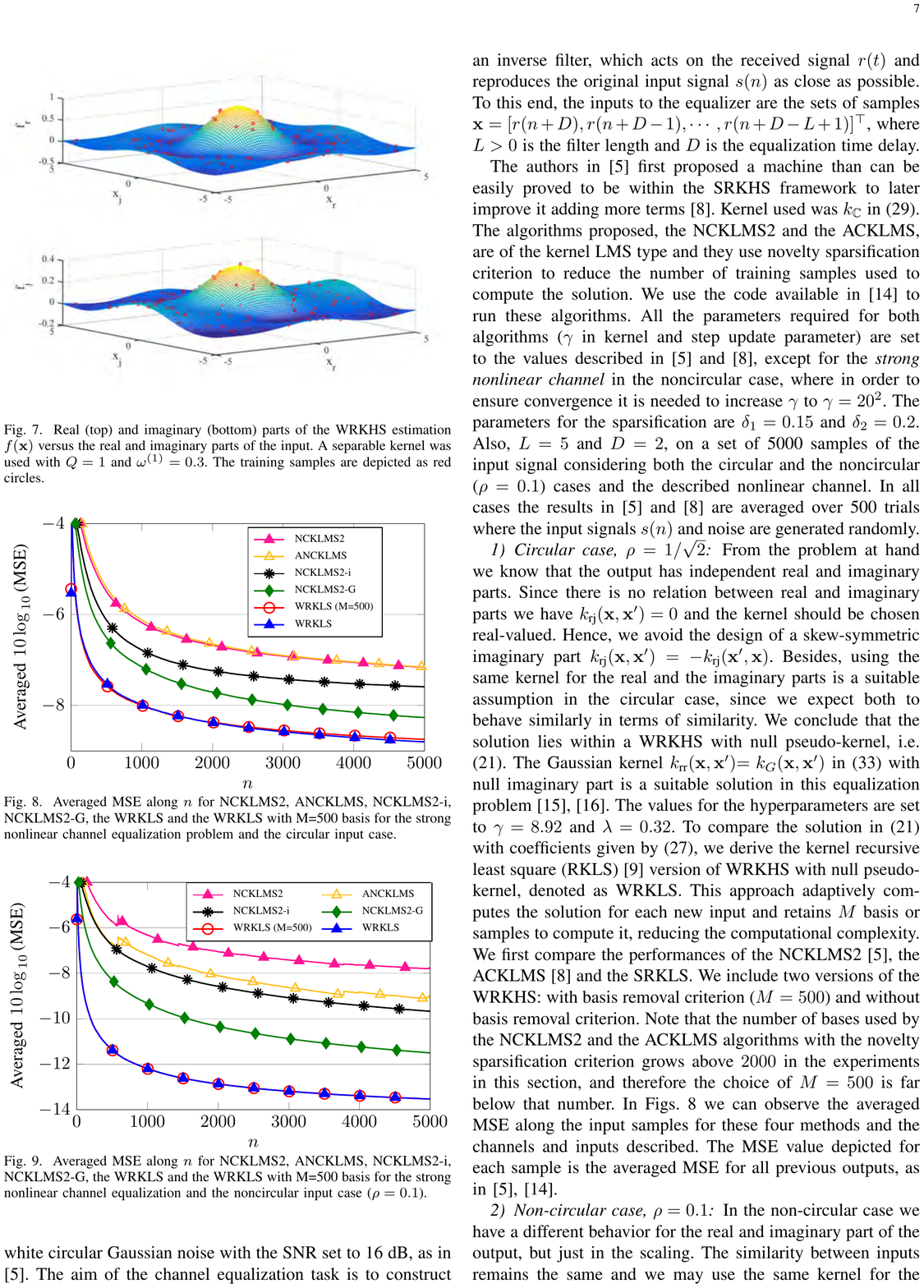}
\end{center}
\vspace*{-.6cm}
\caption{Averaged MSE along $\n$ for NCKLMS2, ANCKLMS, NCKLMS2-i, NCKLMS2-G, the WRKLS and the WRKLS with M=500 basis for the strong nonlinear channel equalization problem and the circular input case.}
\LABFIG{fig6}
\end{figure}
%

\begin{figure}[tb!]
\begin{center}
\includegraphics[width=8.7cm, draft=false]{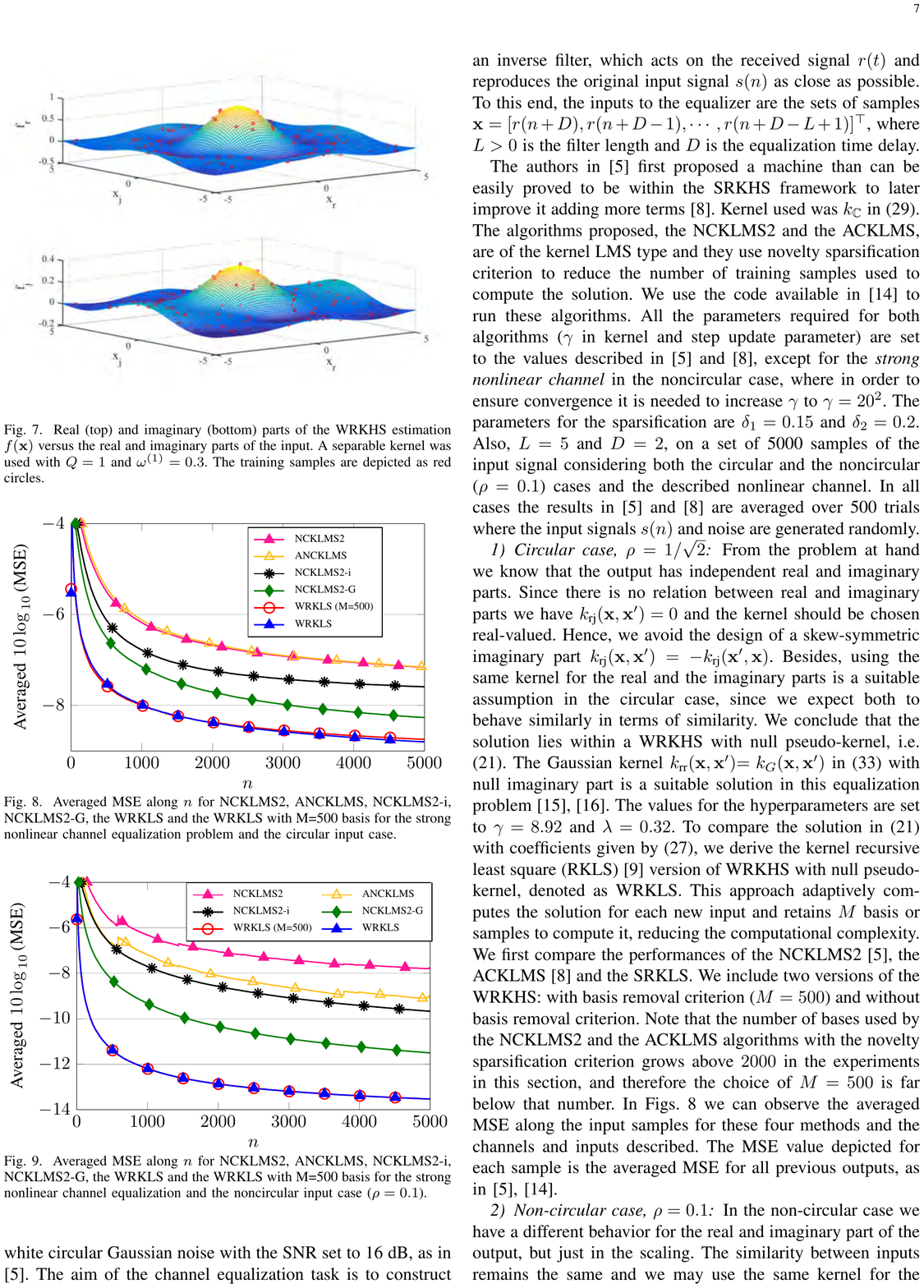}
\end{center}
\vspace*{-.6cm}
\caption{Averaged MSE along $\n$ for NCKLMS2, ANCKLMS, NCKLMS2-i, NCKLMS2-G, the WRKLS and the WRKLS with M=500 basis for the strong nonlinear channel equalization and the noncircular input case ($\rho=0.1$).}
\LABFIG{fig8}
\end{figure}

\subsection{Nonlinear channel equalization}
The main advantage of the analysis of the WRKHS in this paper is that it greatly facilitates decisions to make through the design to select the simplest model. 
To illustrate this point we bring here the nonlinear channel equalization in \cite{Bouboulis11} and \cite{Bouboulis12}.
%
%
We use the channel considered in \cite{Bouboulis11} and \cite{Bouboulis12}. It consists of a linear filter $t(n)=(-0.9+0.8\j)\cdot s(n)+(0.6-0.7\j)\cdot s(n-1)$ and a memoryless nonlinearity. The nonlinearity is $q(n)=t(n)+(0.2+0.25\j)\cdot t^{2}(n)+ (0.12+0.09\j)\cdot t^{3}(n)$ and it is labeled as {\it strong nonlinear channel}. The input signals had the form $s(n)=0.70(\sqrt{1-\rho^{2}}X(n)+\j\rho Y(n))$,  and $X(n)$ and $Y(n)$ were Gaussian random variables. Note that the real and the imaginary parts of the input signals were generated independently. The input signals are circular for $\rho=1/\sqrt{2}$ and highly noncircular if $\rho$ approaches $0$ or $1$. At the receiver end of the channel, the signal $q(n)$ was corrupted by additive white circular Gaussian noise with the SNR set to 16 dB, as in \cite{Bouboulis11}.
The aim of the channel equalization task is to construct an inverse filter, which acts on the received signal $r(t)$ and reproduces the original input signal $s(n)$ as close as possible. To this end, the inputs to the equalizer are the sets of samples $\vect{x}=[r(n+D),r(n+D-1),\cdots,r(n+D-L+1)]^{\top}$, where $L>0$ is the filter length and $D$ is the equalization time delay. 

The authors in \cite{Bouboulis11} first proposed a machine than can be easily proved to be within the SRKHS framework to later improve it adding more terms \cite{Bouboulis12}. Kernel used was $\k_\CN$ in \EQ{RK}. The algorithms proposed, the NCKLMS2 and the ACKLMS, are of the kernel LMS type and they use novelty sparsification criterion to reduce the number of training samples used to compute the solution. We use the code available in \cite{boubouliscode} to run these algorithms. All the parameters required for both algorithms ($\gamma$ in kernel and step update parameter) are set to the values described in \cite{Bouboulis11} and \cite{Bouboulis12}, except for the {\it strong nonlinear channel} in the noncircular case, where in order to ensure convergence it is needed to increase $\gamma$ to $\gamma=20^{2}$. The parameters for the sparsification are $\delta_{1}=0.15$ and $\delta_{2}=0.2$. 
Also, $L=5$ and $D=2$, on a set of 5000 samples of the input signal considering both the circular and the noncircular ($\rho=0.1$) cases and the described nonlinear channel. In all cases the results in \cite{Bouboulis11} and \cite{Bouboulis12} are averaged over 500 trials where the input signals $s(n)$ and noise are generated randomly. 

\subsubsection{Circular case, $\rho=1/\sqrt{2}$}

From the problem at hand we know that the output has independent real and imaginary parts. Since there is no relation between real and imaginary parts we have $\k\rrjj(\x,\x')=0$ and the kernel should be chosen real-valued. Hence, we avoid the design of a skew-symmetric imaginary part {$\k\rrjj(\x,\x')=-\k\rrjj(\x',\x)$}. Besides, using the same kernel for the real and the imaginary parts is a suitable assumption in the circular case, since we expect both to behave similarly in terms of similarity. We conclude that the solution lies within a WRKHS with null pseudo-kernel, i.e. {\EQ{SRKHS}}. The Gaussian kernel $k\rrrr(\x,\x')$$=\k_{G}(\x,\x')$ in \EQ{expkernel} with null imaginary part is a suitable solution in this equalization problem \cite{PerezCruz13gp,PerezCruz08}. 
The values for the hyperparameters are set to $\gamma=8.92$ and $\lambda=0.32$.
To compare the solution in \EQ{SRKHS} with coefficients given by \EQ{alfas}, we derive the kernel recursive least square (RKLS) \cite{Vaerenbergh12} version of WRKHS with null pseudo-kernel, denoted as WRKLS. 
This approach adaptively computes the solution for each new input and retains $M$ basis or samples to compute it, reducing the computational complexity.  
We first compare the performances of the NCKLMS2 \cite{Bouboulis11}, the ACKLMS \cite{Bouboulis12} and the SRKLS. %
 We include two versions of the WRKHS: with basis removal criterion ($M=500$) and without basis removal criterion. 
Note that the number of bases used by the NCKLMS2 and the ACKLMS algorithms with the novelty sparsification criterion grows above $2000$ in the experiments in this section, and therefore the choice of $M=500$ is far below that number. 
In Figs. \ref{fig:fig6} we can observe the averaged MSE along the input samples for these four methods and the channels and inputs described. The MSE value depicted for each sample is the averaged MSE for all previous outputs, as in \cite{Bouboulis11,boubouliscode}.  

\subsubsection{Non-circular case, $\rho=0.1$}
In the non-circular case we have a different behavior for the real and imaginary part of the output, but just in the scaling. The similarity between inputs remains the same and we may use the same kernel for the real and imaginary part. The solution for the coefficients will change, but just in scale. The module of $\alfav\rr$ in \EQ{alfas} increases while the module of $\alfav\jj$ decreases. Therefore, we may apply exactly the same solution as in the circular case. In Figs. \ref{fig:fig8} we compare the performance of the WKRLS with no basis selection and $500$ basis to the results of the mean square error for the NCKLMS2 and the ACKLMS with sparsification. The values for the hyperparameters are set to $\gamma=10.4$ and $\lambda=0.18$.

\subsubsection{Discussion}
It can be observed in the figures the remarkable good results of the WKRLS in all the cases,  {\it strong} nonlinear channels and circular or noncircular signals. 
With only $M=500$ bases used for the prediction, the averaged MSE is very close to the method using all samples.
When comparing with the NCKLMS2 and ACKLMS, the SKRLS remarkably outperforms both algorithms that use above $2000$ basis. 
The gains are not only due to the better capabilities of the recursive and basis removal approach used but to the model selection. From the results in this paper we conclude first that the best option is a WKRHS with real-valued kernel and null pseudo-kernel. Since real and imaginary parts are independent and they exhibit similar similitude measure up to a scaling. Hence, a good selection of kernel and (null) pseudo-kernel improves the final results.  
To further illustrate this point, we also include the NCKLMS2 algorithm with the kernel used in the WRKLS, in \EQ{expkernel}. This algorithm is labeled as NCKLMS2-G in the {figures}. The parameters for this algorithm are the same that were previously used for the NCKLMS2, and the novelty criterion is again used for the sparsification. 
Note that by using the proposed kernel we obtain a much better performance in all cases when comparing with the NCKLMS2 or ACKLMS algorithms. 
%
Finally, for the sake of completeness we also include in the comparison the method in \cite{Tobar12}: the NCKLMS2 algorithm with the independent kernel (\ref{eq:indepCGK}) with $\kappa_{\Rext}$ being the real-valued Gaussian kernel. We labeled this algorithm as NCKLMS2-i in the figures. Although this kernel seems more suitable for the problem at hand than the complex Gaussian kernel \EQ{RK}, as shown in the figures, the performance is not as good as the NCKLMS2-G algorithm. The reason, again, is a sub-optimal model selection, where we have a kernel with non-null imaginary part and with the particular {\it cross}-shape shown in Fig. \ref{fig:figkernelTobar}.

\section{Conclusions}

Complex-valued kernel regression has been tackled by adapting the real-valued approaches in a straight forward manner \cite{OgunfunmiP11,Bouboulis11,Tobar12}. As in strictly linear estimation, this is useful in many scenarios, it is not efficient in others. 
We develop a novel solution, WRKHS, to avoid this limitation. The solution is based on including a pseudo-kernel. The resulting structure of the regressor resembles that of the widely complex linear solutions, being capable of learning the real and imaginary parts of the output regardless of the relation between them.
In the experiments we show how systems with independent and different real and imaginary parts are better learned. Regression for correlated real and imaginary parts is also improved. We introduce some proposals for the kernel and the pseudo-kernel. The complex-valued nature of the kernel and the pseudo-kernel is also discussed. When the pseudo-kernel cancels, special attention is to be paid to the imaginary part of the kernel. In this case the imaginary part must be skew symmetric or null. We apply these concepts to face the nonlinear channel equalization, minimizing the overall error. We believe the results in this paper are relevant to better face any complex-valued regression problem, using complex-valued formulation.

\ifCLASSOPTIONcaptionsoff
  \newpage
\fi



%

\bibliographystyle{IEEEtran}
\bibliography{IEEEabrv,CGPR,murilloGP,SSCDMA}

%
%

%








\end{document}